\documentclass{article}

\usepackage[preprint]{neurips_2026}

\usepackage[utf8]{inputenc} 
\usepackage[T1]{fontenc}    
\usepackage{hyperref}       
\usepackage{url}            
\usepackage{booktabs}       
\usepackage{amsfonts}       
\usepackage{nicefrac}       
\usepackage{microtype}      
\usepackage{xcolor}         
\usepackage{subcaption}
\usepackage{graphicx}
\usepackage{float}
\usepackage{algorithm}
\usepackage{algorithmic}
\usepackage{amsthm}
\usepackage{multirow}
\usepackage{soul}
\usepackage{bm}
\usepackage{amsmath}

\title{Decomposing A Neural Learner}

\author{%
  Bi, Zelong\\
  Department of Mathematics \& Statistics\\
  University of New South Wales\\
  High St Kensington NSW 2052 Australia \\
  \texttt{zelong.bi@unsw.edu.au} \\
}

\newtheorem{theorem}{Theorem}
\newtheorem*{theorem*}{Theorem}
\newtheorem{remark}{Remark}
\newcommand{\norm}[1]{\left\lVert#1\right\rVert}

\begin{document}

\maketitle

\begin{abstract}
  Neural learners for classification and regression are often criticized as black boxes. We propose a probabilistic decomposition of neural learners that accounts for heterogeneous predictability across the feature space. A training procedure combining the EM algorithm, stochastic gradient descent, and maximum likelihood estimation is developed to jointly learn the predictable component and an input-dependent reliability score. We show that the optimal decomposed learner recovers the response distribution within the predictable component, while the reliability score characterizes the predictability of individual observations. We further extend the framework to incorporate localized feature transformations. Experiments on synthetic and real-world datasets demonstrate the effectiveness of the proposed framework.
\end{abstract}

\section{Introduction}

The success of modern machine learning in fields such as image processing \citep{krizhevsky2012imagenet} and large language models \citep{vaswani2017attention} can be traced back to developments in neural networks in the 1990s, when neural networks in their simplest form, multilayer perceptrons (MLPs), were first used for supervised learning tasks such as regression and classification \cite{murtagh1991multilayer}. From the perspective of the universal approximation theorem \citep{cybenko1989approximation}, if the features and response variable $\bm{x}$ and $y$ are related through an appropriate functional relationship, such that $\mathbb{E}[y\mid \bm{x}]$ or $p(y\mid\bm{x})$ is some function $f(\bm{x})$, then the neural learner can be viewed as learning to approximate the underlying function $f$.

\begin{figure}[H]
    \centering
    \begin{subfigure}[b]{0.3\textwidth}
        \centering
        \includegraphics[height=0.6\textwidth]{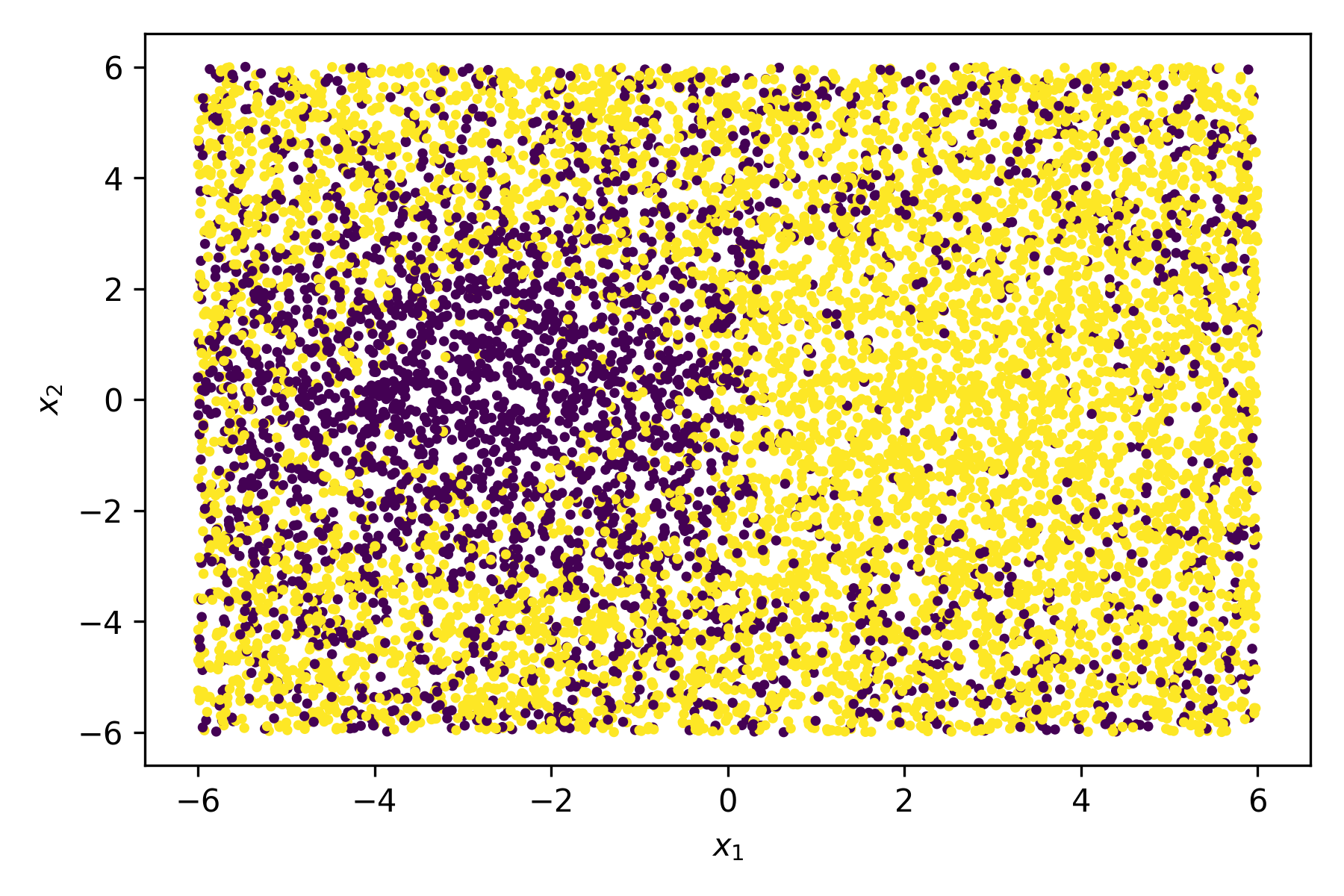}
        \caption{Heterogeneous feature space}
        \label{fig:01a}
    \end{subfigure}
    \hfill
    \begin{subfigure}[b]{0.3\textwidth}
        \centering
        \includegraphics[height=0.6\textwidth]{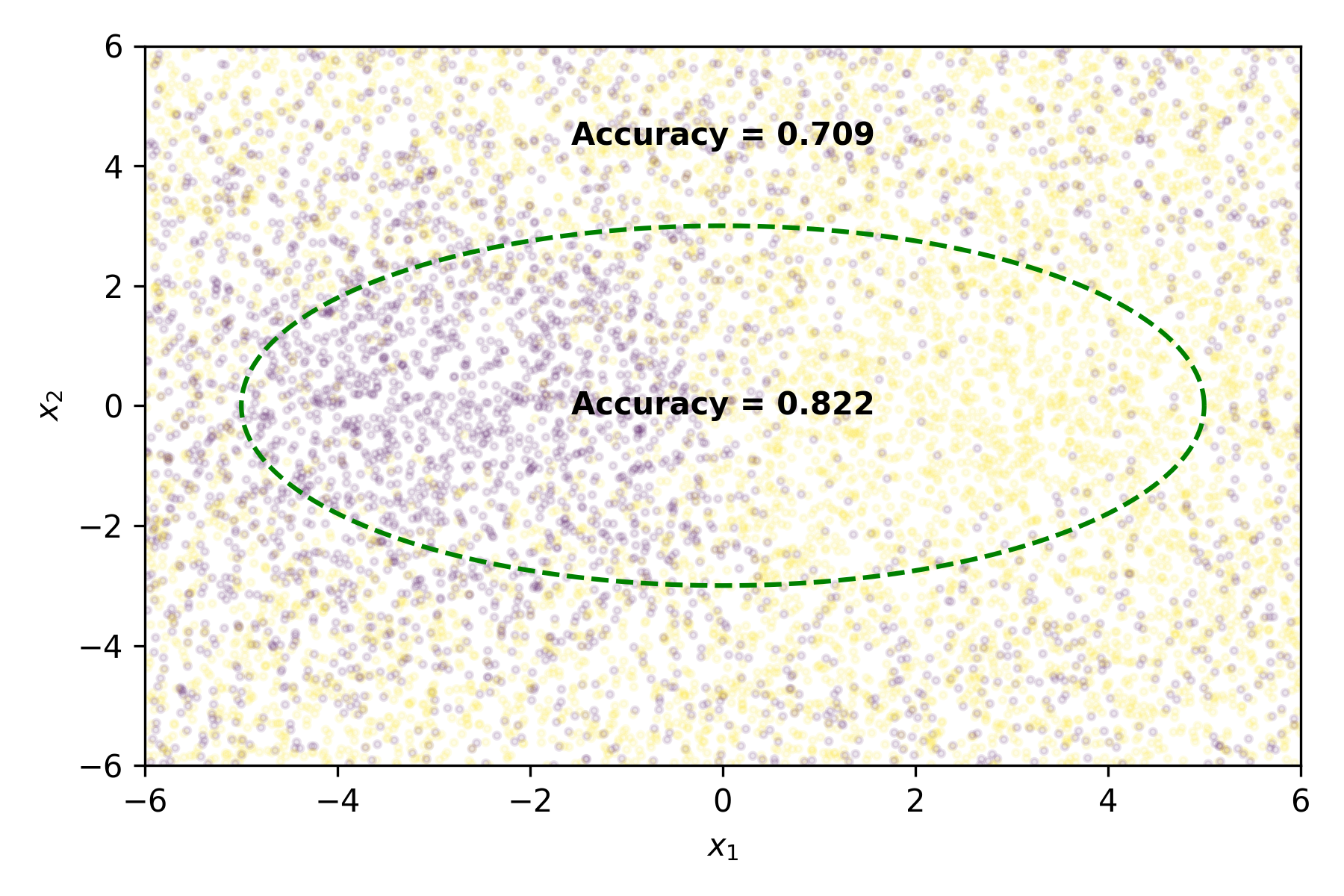}
        \caption{Accuracy by region}
        \label{fig:01b}
    \end{subfigure}
    \hfill
    \begin{subfigure}[b]{0.3\textwidth}
        \centering
        \includegraphics[height=0.6\textwidth]{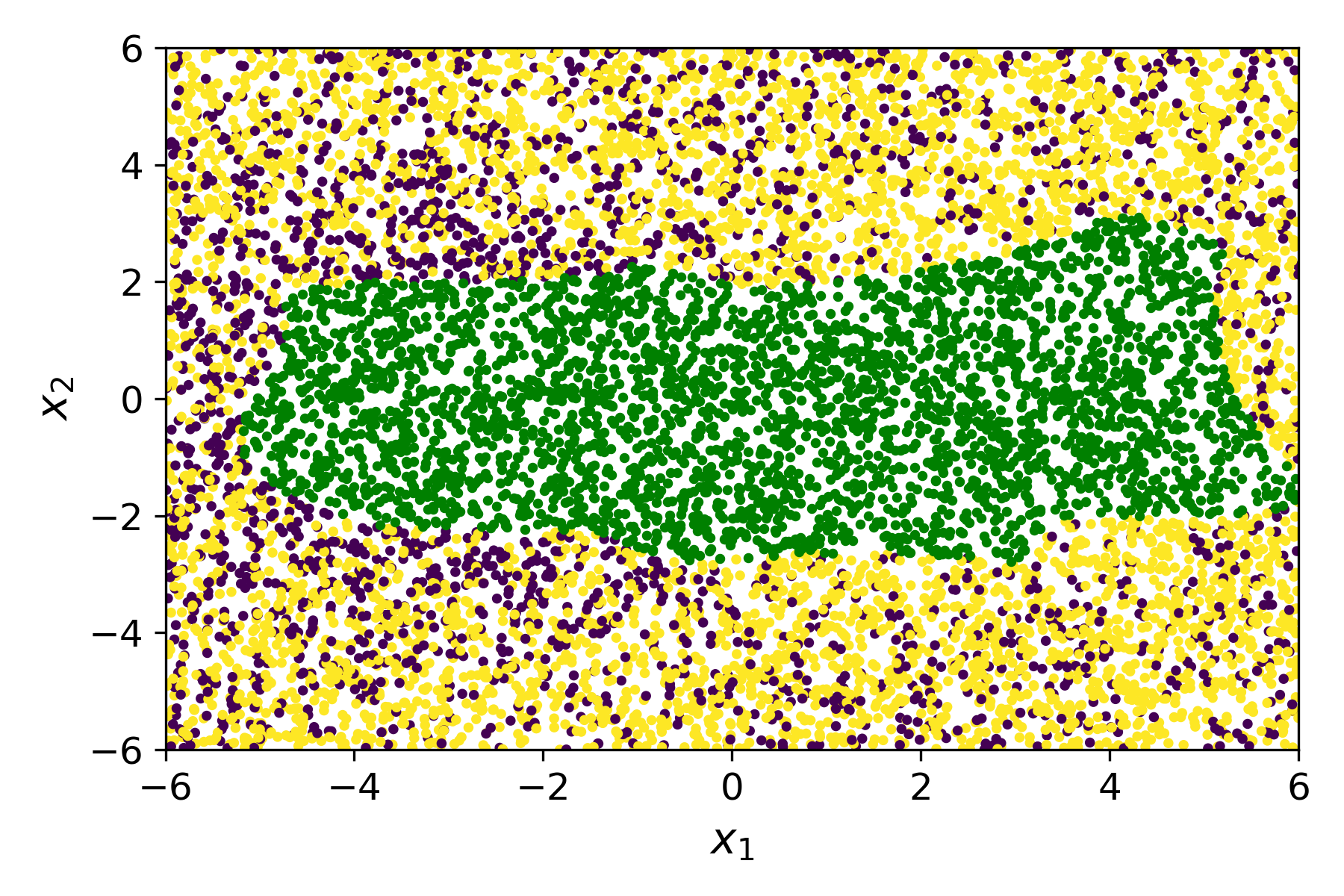}
        \caption{High reliability instances}
        \label{fig:01c}
    \end{subfigure}

        \caption{(a) Labeled dataset across the feature space. (b) Accuracy of a neural classifier in different regions. (c) High-reliability instances (in green, with $p(z=1\mid\bm{x}) > 0.5$) identified by our model.}
    \label{fig:01}
\end{figure}

A common criticism of neural learners is that, despite their strong predictive performance, the structure learned is often difficult to interpret or inspect, contributing to their characterization as black-box models \citep{rudin2019stop}. One aspect of the learned information that is particularly important but largely hidden in the standard formulation is where in the feature space the relationship between $y$ and $\bm{x}$ exists. The feature space need not be homogeneous in this respect: in some regions, $y$ may depend strongly on $\bm{x}$, whereas in others, they can be independent. For example, in Figure~\ref{fig:01a}, the relationship between $\bm{x}$ and $y$ appears strong in the central region but absent elsewhere.

The absence of a relationship is itself informative. When $\bm{x}$ and $y$ are independent in a region, a well-trained neural learner may correctly learn a constant function, but this does not reveal that $\bm{x}$ provides no predictive information or indicate that the resulting prediction is unsupported by the available features. As shown in Figure~\ref{fig:01b}, an overall accuracy of 74\% by a neural classifier can arise from substantially different accuracies across regions, such information is not available in the common way it is trained and used. Explicitly modelling such structure can therefore provide a more informative characterization of the data and improve the transparency of neural learners without sacrificing predictive performance, which can be very useful when assessing their deployment in critical applications.

In this manuscript, we propose a novel framework that improves the interpretability of neural learners by decomposing its predication. For each instance, we introduce a latent binary variable $z\in\{0,1\}$ indicating whether a relationship between $\bm{x}$ and $y$ is present. We define the reliability score as $w(\bm{x})=p(z=1\mid\bm{x})$, which is assumed to be a smooth function of $\bm{x}$. By interpreting $z$ as an indicator of predictability, the conditional distribution of $y$ can be decomposed as
\begin{align}\label{eqn-1}
p(y\mid\bm{x})
&=p(y\mid\bm{x},z=1)w(\bm{x})
+p_{\mathrm{rand}}(y)[1-w(\bm{x})],
\end{align}
where $p_{\mathrm{rand}}(y):=p(y\mid\bm{x},z=0)$, since $y$ is independent of $\bm{x}$ when $z=0$. As illustrated in Figure~\ref{fig:01c}, the reliability score can be learned by a separate neural network in a principled manner, resulting in the learner network identifying the predictable component $p(y\mid\bm{x},z=1)$ instead. The proposed framework applies to both regression and classification, can be trained efficiently with probabilistic guarantees, and can be extended to incorporate supervised feature transformations that preserve locality.

Our main contributions are as follows:

\begin{enumerate}
\item We propose a probabilistic framework that explicitly decomposes the conditional distribution of the response into predictable and unpredictable components through an input-dependent reliability score. The reliability score quantifies the local presence of a predictive relationship between the features and response and is learned by a separate neural network. The proposed decomposition improves the interpretability of neural learners by explicitly modeling and learning both the reliability structure and the predictive component, while retaining the capability of a naive neural learner.

\item We develop a principled training procedure that combines the expectation--maximization (EM) algorithm \citep{dempster1977maximum} with stochastic gradient descent (SGD; \citet{rumelhart1986learning}) and maximum likelihood estimation (MLE) to jointly learn the reliability function and the predictive component. Experiments on both synthetic and real-world datasets demonstrate the effectiveness of the proposed model and training procedure in uncovering meaningful structures from the data that can facilitate decision-making and feature selection.

\item We show that the framework can be naturally extended with supervised feature transformation networks, allowing the learned representation to be optimized jointly with the final supervised task. When the feature--response relationship exhibits meaningful local structure, such as under the manifold hypothesis \citep{meilua2024manifold}, the framework can further split the problem into a collection of local models. Each local model can nevertheless be trained using the entire dataset, with observations appropriately weighted according to the corresponding probabilistic structure.
\end{enumerate}

For the remainder of the article, Section~\ref{background} introduces related work and relevant background, Section~\ref{design} describes the properties, training, and extensions of our model, Section~\ref{experiments} presents the experimental results, and Section~\ref{conclusion} concludes the article. Additional materials are provided in the appendix.

\section{Related Work}\label{background}

Our proposed framework is related to several existing approaches in statistics and machine learning. In particular, similarities arise from the use of latent mixture components, feature-dependent weighting, and iterative optimization. We review representative approaches from these areas and clarify how their objectives and interpretations differ from those of our framework.

\subsection{Noisy Labels}

Classification with noisy labels considers settings in which observed response labels are contaminated or corrupted. Robust estimation in the presence of such contamination has been studied since the development of robust statistics \citep{copas1988binary}, and this line of research has continued with the development of modern machine learning \citep{natarajan2013learning, scott2013classification}. In contrast, although heteroscedasticity is well studied in regression, it primarily concerns variation in the conditional variance of the response given $\bm{x}$, rather than variation in the predictability of the response itself.

Our framework adopts a different modeling perspective. Rather than interpreting poorly explained responses as corrupted observations, we allow the degree of predictability to vary across the feature space. An unpredictable response may therefore be perfectly valid, but simply not be explainable from the available features. Moreover, whereas existing approaches to noisy labels often rely on heuristic procedures \citep{han2018co} or prespecified parametric models of label corruption \citep{patrini2017making}, our framework uses neural networks to flexibly model reliability and incorporates the distinction between predictable and unpredictable components directly into a unified probabilistic model.

\subsection{Reliability and Feature-dependent Weights}

The reliability score $p(z=1\mid\bm{x})$ in our framework is a natural measure of the trustworthiness of the model prediction. Various approaches have been proposed in the past to quantify the predictability or reliability of a learner's output. These include measures such as the Bayes error \citep{ishida2022performance}, which characterizes the intrinsic predictability of a task, as well as neural networks trained to produce task-oriented selection or confidence scores, such as the selection function in SelectiveNet \citep{geifman2019selectivenet}. In contrast, our framework encodes reliability into the probabilistic model structure, and seeks to learn it directly.

As we will demonstrate, the (posterior) reliability score also serves as a weight for training the classification or regression network, allowing it to uncover the distribution of the predictable component within the overall distribution of $y$. Although assigning different weights to instances during training is common, such weights are often defined heuristically without probabilistic guarantees. The use of input-dependent weights is also superficially related to mixture models \citep{jacobs1991adaptive, huang2012mixture, xue2022machine}, where weights determine the contribution of different local predictive models based on the input. In our framework, however, the weights have a different meaning: we have one single model and the input-dependent reliability score represents the local probability that the response is predictable from the available features.

\subsection{EM-based Training with Neural Networks}

The training of neural networks embedded within probabilistic models has long been investigated, particularly in the Bayesian literature \citep{mackay1992practical, neal2012bayesian}. Since our model involves latent variables, its training should naturally combine the EM algorithm with SGD (with MLE also involved). Similar combinations have appeared in previous work \citep{ng2004using, lathuiliere2018deepgum}, although the specific procedures vary according to the underlying probabilistic models. Details of our training procedure are presented in Sections~\ref{training}.

\begin{figure}[htbp]
    \centering
    \includegraphics[width=\textwidth]{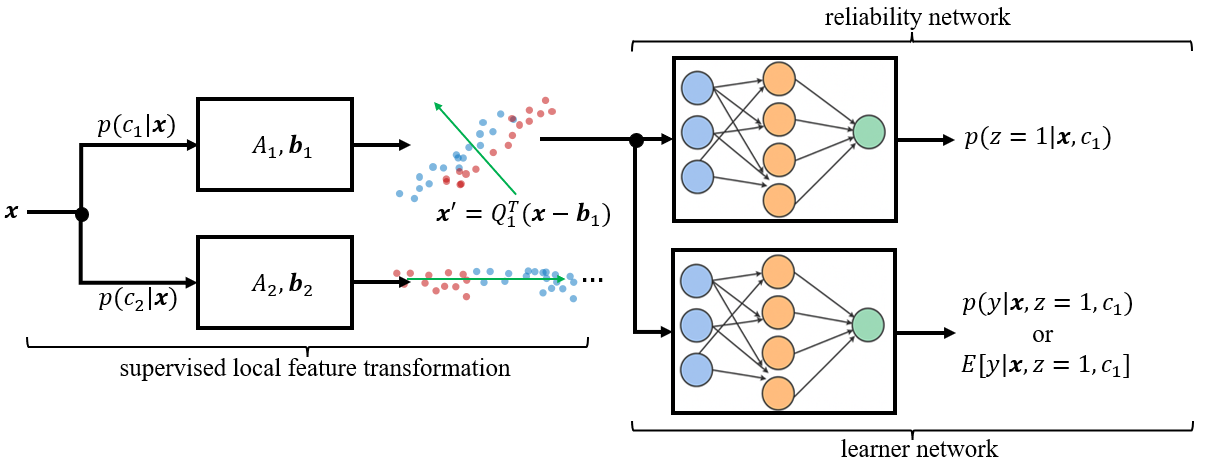}
    \caption{Overall architecture of our framework.}
    \label{fig:02}
\end{figure}

\section{Our Framework}\label{design}

In this section, we provide details of our framework, including the model specification and properties, training procedure, and its extension to enable localized feature transformations supervised by the target tasks.

\subsection{Basic Structure and Properties}

Figure~\ref{fig:02} illustrates the overall architecture of our framework, which, according to Equation~\ref{eqn-1}, seeks to model variation in predictability across the feature space. In addition to the neural network used for classification or regression (the learner), we employ a separate neural network to learn the reliability score $w(\bm{x}) = p(z=1\mid\bm{x})$. Throughout this work, we use MLPs with a single hidden layer of 64 neurons for both networks, leaving the exploration of more complex architectures for future work. We also recognize that different regions of the feature space may benefit from more carefully tuned local models, particularly in terms of feature transformation, the extension will be discussed in Section~\ref{feature}.

For a neural classifier, the conditional distributions of the response, $p(y\mid\bm{x},z=1)$ and $p_{\mathrm{rand}}(y)$, are simply class probabilities. For a neural regressor, we further model these conditional distributions as Gaussian, i.e.,
\begin{align}
y \mid \bm{x},z=1 &\sim N\left(y;f(\bm{x}),\sigma_{1}^2\right), \
y \mid \bm{x}, z=0 \sim N\left(y;\mu_0,\sigma_{0}^2\right),
\end{align}
where $f,\mu_0, \sigma_0^2$ and $\sigma_1^2$ are unknown functions and parameters. In other words, when $z=1$, we have $\mathbb{E}[y\mid\bm{x},z=1]=f(\bm{x})$, whereas when $z=0$, $y$ is independent of $\bm{x}$ and behaves like white noise.

Notice that in both cases, we have 
\begin{align} p(y, z \mid \bm{x}) &= p(z\mid\bm{x})p(y\mid \bm{x}, z) \nonumber\\ &= p(z = 1\mid\bm{x})^z[1 - p(z = 1\mid\bm{x})]^{1 - z}p(y\mid \bm{x}, z = 1)^zp_{\mathrm{rand}}(y)^{1 - z}, \end{align} hence, the joint log-density of $y$ and $z$ can be obtained as \begin{align}\label{eqn-joint} \log p(y, z\mid \bm{x}) &= \underbrace{ z\log p(z = 1\mid\bm{x}) +(1-z)\log[1-p(z = 1\mid\bm{x})] }_{\text{reliability network}} \nonumber\\ &\quad+ \underbrace{ z\log p(y\mid\bm{x},z=1) }_{\text{learner network}} + \underbrace{ (1-z)\log p_{\mathrm{rand}}(y) }_{\text{random-response component}}, \end{align} 
which is naturally decomposed into three components corresponding to the reliability network, the learner network, and the estimation of the distribution of the unpredictable component, respectively.

Ideally, according to Equation~\ref{eqn-joint}, if the latent variable $z$ is known, the learner network should be trained on instances with $z = 1$, which should lead to it learn the distribution of the predictable component $p(y\mid \bm{x}, z = 1)$. However, since $z$ is not observable, as we will demonstrate in Section~\ref{training}, the empirical learner target used in training is actually the log likelihood of the clean component weighted by the posterior reliability
\begin{align}\label{eqn-posterior}
\gamma(\bm{x}, y) :&= p(z = 1 \mid \bm{x}, y) \nonumber\\&= \frac{p(z=1\mid \bm{x})p(y \mid \bm{x}, z = 1)}{p(z=1\mid \bm{x})p(y \mid \bm{x}, z = 1) + p_\mathrm{rand}(y)[1 - p(z=1\mid \bm{x})]}.  
\end{align}
The good thing is, assuming that training converges to the optimum, this procedure also leads the learner to recover the predictable component of the response, which is precisely the distribution the learner is intended to model. See Appendix~\ref{appendix-a} for the proofs of the following results.

\begin{theorem}\label{thm:01}
Assume that the training procedure described in Section~\ref{training} is applied to our framework for regression and reliability estimation is exact. At the population optimum, the learner network outputs
$$
f^*(\bm{x}) = \mathbb{E}[y\mid\bm{x},z=1].
$$
\end{theorem}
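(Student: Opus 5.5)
The plan is to show that, with the E-step posterior fixed at its exact value, the M-step objective for the learner is a $\gamma$-weighted least-squares problem. Its pointwise minimizer can then be computed and identified with the conditional mean of the predictable component.

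\emph{The learner's M-step objective.} From the joint log-density in Equation~\ref{eqn-joint}, taking expectation over $z$ with respect to the posterior in Equation~\ref{eqn-posterior} isolates the learner term. At the population level this term is
\begin{align*}
\mathcal{L}(f) = \mathbb{E}_{\bm{x},y}\bigl[\gamma(\bm{x},y)\log N(y;f(\bm{x}),\sigma_1^2)\bigr] = -\frac{1}{2\sigma_1^2}\,\mathbb{E}_{\bm{x},y}\bigl[\gamma(\bm{x},y)(y-f(\bm{x}))^2\bigr] + \text{const}.
\end{align*}
Since $\sigma_1^2>0$ is a fixed scalar, maximizing over $f$ amounts to minimizing the weighted squared error. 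The hypothesis that ``reliability estimation is exact'' is what lets me take $\gamma(\bm{x},y)$ to be the true $p(z=1\mid\bm{x},y)$. It is also where the assumption that the network class is rich enough enters: $f$ can then be optimized separately at each $\bm{x}$.

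\emph{Pointwise minimization.} Conditioning on $\bm{x}$, I minimize $\mathbb{E}[\gamma(\bm{x},y)(y-c)^2\mid\bm{x}]$ over $c$. This is a convex quadratic in $c$, with minimizer
\begin{align*}
f^*(\bm{x}) = \frac{\mathbb{E}[\gamma(\bm{x},y)\,y\mid\bm{x}]}{\mathbb{E}[\gamma(\bm{x},y)\mid\bm{x}]},
\end{align*}
well defined whenever $w(\bm{x})>0$.

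\emph{Identification.} By the tower property, $\mathbb{E}[\gamma\mid\bm{x}]=\mathbb{E}[\mathbb{E}[\mathbf{1}\{z=1\}\mid\bm{x},y]\mid\bm{x}]=p(z=1\mid\bm{x})$. In the same way, $\mathbb{E}[\gamma y\mid\bm{x}]=\mathbb{E}[y\mathbf{1}\{z=1\}\mid\bm{x}]=p(z=1\mid\bm{x})\,\mathbb{E}[y\mid\bm{x},z=1]$. Dividing gives $f^*(\bm{x})=\mathbb{E}[y\mid\bm{x},z=1]$. An alternative route writes $\gamma\,p(y\mid\bm{x})=w(\bm{x})p(y\mid\bm{x},z=1)$ directly from Equation~\ref{eqn-posterior} and integrates.

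\emph{Main obstacle.} The difficulty is interpretive rather than computational. I must make precise that ``exact reliability estimation'' means the posterior weights used in training coincide with the true $p(z=1\mid\bm{x},y)$. This matters because in EM the weights depend on the current $f$, so the claim is about a self-consistent fixed point. I would state this explicitly and verify that the true $f$ is a fixed point: plugging the true parameters into the E-step gives the true $\gamma$, and the M-step then returns the true conditional mean. That makes the population optimum consistent with $f^*$.

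I would also note that the result does not depend on the value of $\sigma_1^2$, or on $\mu_0$ and $\sigma_0^2$ beyond their role in $\gamma$. Finally, on the set where $w(\bm{x})=0$, $f^*$ is unconstrained, and the statement should be read as holding on the support of $w$.
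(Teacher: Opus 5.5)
Your proposal is correct and rests on the same key identity as the paper's proof: for exact posterior weights, $\mathbb{E}[\gamma(\bm{x},y)\,h(\bm{x},y)] = \mathbb{E}[z\,h(\bm{x},y)]$ by the tower property. The paper applies this to the whole weighted log-likelihood first, reducing the objective to $p(z=1)\,\mathbb{E}[\log p(y\mid\bm{x},z=1;\bm{\theta}_L)\mid z=1]$ and then reading off the conditional-mean minimizer, whereas you first solve the $\gamma$-weighted least-squares problem as $\mathbb{E}[\gamma y\mid\bm{x}]/\mathbb{E}[\gamma\mid\bm{x}]$ and then identify the numerator and denominator, which is only a reordering (the paper's order carries over directly to the classification case, Theorem~\ref{thm:02}, while yours yields an explicit weighted-mean formula valid for any weights $\gamma$).
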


\begin{theorem}\label{thm:02}
Assume that the training procedure described in Section~\ref{training} is applied to our framework for classification and reliability estimation is exact. At the population optimum, the learner network outputs the conditional distribution
$$
f^*(\bm{x}) = p(y\mid\bm{x},z=1),
$$
over the classes of $y$.
\end{theorem}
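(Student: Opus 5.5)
The plan is to write down the population objective that the learner network minimizes in the M-step and then minimize it pointwise in $\bm{x}$. By the decomposition in Equation~\ref{eqn-joint}, the expected complete-data log-likelihood splits into three separate terms once $z$ is replaced by its posterior expectation. Only the learner term depends on the learner's output. In the classification case the learner maps each $\bm{x}$ to a probability vector $f(\bm{x})=(f_1(\bm{x}),\dots,f_K(\bm{x}))$ on the simplex. The E-step replaces $z$ by $\gamma(\bm{x},y)$ from Equation~\ref{eqn-posterior}, so the learner's population objective is
$$
\mathcal{L}(f)=\mathbb{E}_{\bm{x},y}\bigl[\gamma(\bm{x},y)\log f_y(\bm{x})\bigr].
$$
Since reliability estimation is assumed exact, $\gamma$ is computed from the true $w(\bm{x})$, the true $p_{\mathrm{rand}}$ and the true $p(y\mid\bm{x},z=1)$. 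In particular, $\gamma(\bm{x},y)=p(z=1\mid\bm{x},y)$ under the data-generating law, and this holds at the EM fixed point as well.

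The key step is to condition on $\bm{x}$:
$$
\mathbb{E}\bigl[\gamma(\bm{x},y)\log f_y(\bm{x})\mid\bm{x}\bigr]=\sum_{k} p(y=k\mid\bm{x})\,p(z=1\mid\bm{x},y=k)\log f_k(\bm{x}).
$$
By Bayes' rule the weight $p(y=k\mid\bm{x})\,p(z=1\mid\bm{x},y=k)$ equals $p(y=k,z=1\mid\bm{x})=w(\bm{x})\,p(y=k\mid\bm{x},z=1)$. The conditional objective is therefore $w(\bm{x})\sum_k p(y=k\mid\bm{x},z=1)\log f_k(\bm{x})$.

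For each $\bm{x}$ with $w(\bm{x})>0$, maximizing this over the simplex is the standard cross-entropy (Gibbs' inequality) problem. One can argue either by a Lagrange multiplier or via $\mathrm{KL}\ge 0$. Either way, the unique maximizer is $f_k^*(\bm{x})=p(y=k\mid\bm{x},z=1)$. Because the learner is assumed flexible enough to attain the pointwise optimum, the population optimum of $\mathcal{L}$ is achieved by this choice almost everywhere in $\bm{x}$.

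The main obstacle is justifying that exactness of reliability estimation makes $\gamma$ the true posterior even though $\gamma$ itself involves the learner output $p(y\mid\bm{x},z=1)$. I would handle this with a fixed-point argument. Assume $\gamma$ is evaluated at the true parameters. The M-step then returns exactly the true $p(y\mid\bm{x},z=1)$, as shown above. That true conditional in turn reproduces the same $\gamma$, so the true distribution is a stationary point of the EM iteration, which is the sense of ``population optimum'' intended here. A secondary point is that where $w(\bm{x})=0$ the learner is unconstrained; I would note that the claim is understood on $\{w>0\}$.
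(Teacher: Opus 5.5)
Your proof is correct and essentially the paper's: both reduce the $\gamma$-weighted objective to a cross-entropy against $p(y\mid\bm{x},z=1)$ and conclude with Gibbs' inequality. The only cosmetic difference is that the paper factors out $p(z=1)$ and conditions on $z=1$, whereas you condition on $\bm{x}$ and factor out $w(\bm{x})$; these agree via $w(\bm{x})p(\bm{x})=p(z=1)\,p(\bm{x}\mid z=1)$. Your restriction to $\{w>0\}$ is a correct refinement the paper leaves implicit. The fixed-point paragraph is not needed, since exactness of $\gamma$ is a hypothesis, and as written it checks only the learner update, not the reliability and $p_{\mathrm{rand}}$ updates a genuine fixed-point claim would also require.
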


\subsection{Training Procedure}\label{training}

Direct maximization of the joint density in Equation~\ref{eqn-joint} is not possible because the latent variable $z$ is unobserved. We therefore employ the EM algorithm. At the E-step, the posterior reliability in Equation~\ref{eqn-posterior} is estimated and used to construct the expected log-likelihood, which is then maximized in the M-step. The reliability and learner networks are optimized using SGD, whereas the unpredictable component admits a closed-form maximization.

Let $\bm{\theta}^t = (\bm{\theta}_R^t, \bm{\theta}_L^t, \bm{\theta}_N^t)$ denote the model parameters at training step $t$, corresponding to the reliability network and the predictable and unpredictable components of the response distribution, respectively. 

In the E-step, we compute the expectation of the log-likelihood with respect to the posterior distribution of the latent variable $z$, yielding the $Q$-function
\begin{align}\label{eqn-qfunction}
Q(\bm{\theta} \mid \bm{\theta}^t)
:&=
\mathbb{E}_{p(z\mid \bm{x},y;\bm{\theta}^t)}
\left[
\log p(y,z\mid\bm{x};\bm{\theta})
\right] \nonumber\\
&=
\underbrace{\gamma^t
\log p(z=1\mid\bm{x};\bm{\theta}_R)
+(1-\gamma^t)
\log[1-p(z=1\mid\bm{x};\bm{\theta}_R)]}_{Q(\bm{\theta}_R\mid \bm{\theta}^t)}
\nonumber\\
&\quad
+\underbrace{\gamma^t
\log p(y\mid\bm{x},z=1;\bm{\theta}_L)}_{Q(\bm{\theta}_L\mid \bm{\theta}^t)}
+\underbrace{(1-\gamma^t)
\log p_{\mathrm{rand}}(y;\bm{\theta}_N)}_{Q(\bm{\theta}_N\mid \bm{\theta}^t)},
\end{align}
where $\gamma^t = p(z=1\mid\bm{x},y;\bm{\theta}^t)$ is the posterior reliability at training step $t$, obtained by substituting the current parameter estimates into Equation~\ref{eqn-posterior}.

In the M-step, the posterior reliabilities ${\gamma^t}$ are held fixed. Since the decomposition of the joint log-likelihood in Equation~\ref{eqn-joint} carries directly over to the corresponding $Q$-function, the parameters associated with the different components can be optimized separately. In particular, an SGD update is performed for both the reliability and learner networks:
\begin{align}
\bm{\theta}_R^{t+1}
&\leftarrow
\bm{\theta}_R^t
+
\eta
\left.
\nabla_{\bm{\theta}_R}
Q(\bm{\theta}_R\mid\bm{\theta}^t)
\right|_{\bm{\theta}_R=\bm{\theta}_R^t},
\qquad
\bm{\theta}_L^{t+1}
\leftarrow
\bm{\theta}_L^t
+
\eta
\left.
\nabla_{\bm{\theta}_L}
Q(\bm{\theta}_L\mid\bm{\theta}^t)
\right|_{\bm{\theta}_L=\bm{\theta}_L^t}.
\end{align}
The parameters of the unpredictable component, as well as $\sigma_1^2$  for the regression case, admit closed-form maximum likelihood estimates. Specifically, for regression, we have
\begin{align}\label{unpred-reg}
(\sigma_1^2)^{t+1}
=
\frac{
\sum_k \gamma_k^t
\left[
y_k-f(\bm{x}_k;\bm{\theta}_L^t)
\right]^2
}{
\sum_k \gamma_k^t
},
\
\mu_0^{t+1}
=\frac{
\sum_k (1-\gamma_k^t)y_k
}{
\sum_k (1-\gamma_k^t)
},
\
(\sigma_0^2)^{t+1}
=
\frac{
\sum_k (1-\gamma_k^t)
\left(y_k-\mu_0^{t+1}\right)^2
}{
\sum_k (1-\gamma_k^t)
};
\end{align}
and for classification, the unpredictable response distribution is updated as
\begin{align}\label{eqn-prandc}
p_{\mathrm{rand}}^{t+1}(y)
&=
\frac{
\sum_k (1-\gamma_k^t)
\mathbf{1}(y_k=y)
}{
\sum_k (1-\gamma_k^t)
}.
\end{align}

The E- and M-steps are repeated iteratively until convergence. And empirically, we find the following modification can be helpful.

\begin{remark}
    When applying the framework to classification, we find that directly using Equation~\ref{eqn-prandc} during iterative training can lead to model collapse. This may be due to the non-identifiability of the model and the strong interdependence between the estimation of the unpredictable component and the posterior reliability. To mitigate this issue, we use
    $$
    w_k^t = p(z=1\mid\bm{x}_k;\bm{\theta}^t)
    $$
    in place of \(\gamma_k^t\) when estimating the unpredictable component during iterative training. Equation~\ref{eqn-prandc} is applied only once after convergence to obtain the final estimate of \(p_{\mathrm{rand}}(y)\). Across all our experiments, this adapted procedure proves more stable and consistently yields meaningful decomposition.

\end{remark}

\begin{remark}
    The current optimization target is entirely probabilistic, which is sufficient when the reliability does not vary substantially over the feature space. When the reliability is expected to vary more delicately, an additional smoothness penalty can be incorporated into the M-step:
    $$
    \lambda_{\mathrm{smooth}}\sum_{k,\ell}s_{k\ell}\norm{w(\bm{x}_k)-w(\bm{x}_\ell)}^2,
    $$
    where \(s_{k\ell}\) is a predefined measure of similarity between \(\bm{x}_k\) and \(\bm{x}_\ell\), encouraging instances with similar features to have similar reliabilities.

\end{remark}

\subsection{Supervised Local Feature Transformation}\label{feature}

The original features $\bm{x}$ may not be optimal for the learner task. Instead of treating feature transformation as a preprocessing step optimized according to some intermediate objective, such as maximizing explained variance in principal component analysis (PCA), we can introduce a learnable feature transformation layer before both the reliability and learner networks\footnote{By doing so, we assume that the reliability and learner networks can share representations, although this can be readily relaxed.}. 

Let $A \in \mathbb{R}^{p\times d}$ be a randomly initialized full-rank matrix with QR decomposition $A=QR$, where $Q\in\mathbb{R}^{p\times d}$ has orthonormal columns and $R\in\mathbb{R}^{d\times d}$ is upper triangular, $\bm{b} \in \mathbb{R}^p$ be a translation vector. We then perform the feature transformation $\bm{x}' = Q^\top (\bm{x} - \bm{b})$, which projects the original features onto the learned $d$-dimensional subspace. The matrix $A$ and vector $\bm{b}$ can be trained jointly with the rest of the model as described in Section~\ref{training}, with the input of reliability and learner networks replaced by $\bm{x}'$. 

In some cases, a global feature transformation may not be well suited to our framework. Given the heterogeneity of the feature space, if predictability varies across different regions, as illustrated in Figure~\ref{fig:02}, the best representation of the features may also vary. 

To enable supervised local feature transformation, we assume that, for each instance, a prior distribution $p(c\mid\bm{x})$ is available, representing the probabilities that the instance belongs to each region $c$. Such a soft allocation can be obtained in various ways. For example, if the main structure of the dataset can be adequately described by a low-dimensional manifold, manifold learning via mixture models of variational autoencoders (VAEs) \citep{alberti2024manifold} can be used. Alternatively, simpler methods such as Gaussian mixture models \citep{reynolds2009gaussian} can be employed when a more direct partition of the feature space is appropriate.

With the soft allocation scheme, we can train separate feature transformation, reliability, and learner networks for different regions of the feature space. Specifically, treating $p(c\mid\bm{x})$ as a fixed external soft allocation, we define the allocation-weighted complete-data log-density as
\begin{align}\label{eqn-jointc}
\mathcal{L}_{\mathrm{soft}}(y,z\mid\bm{x})
&:=
\sum_c p(c\mid\bm{x})\log p(y,z\mid\bm{x},c)
\nonumber\\
&=
\underbrace{
z\sum_c p(c\mid\bm{x})\log p(z=1\mid\bm{x},c)
+(1-z)\sum_c p(c\mid\bm{x})
\log\big[1-p(z=1\mid\bm{x},c)\big]
}_{\text{reliability networks}}
\nonumber\\
&\quad+
\underbrace{
z\sum_c p(c\mid\bm{x})\log p(y\mid\bm{x},z=1,c)
}_{\text{learner networks}}
+
\underbrace{
(1-z)\log p_{\mathrm{rand}}(y)
}_{\text{random-response component}},
\end{align}
which is a lower bound of the exact log likelihood. Here, $p_{\mathrm{rand}}(y)$ is shared across local models, since the unpredictable component is assumed to be invariant across the feature space.

Training with $\mathcal{L}_{\mathrm{soft}}$ as the target has two main advantages. First, each local model can be trained using the full dataset, thereby avoiding the loss of effective sample size that would result from assigning instances exclusively to individual regions. Second, according to Equation~\ref{eqn-jointc}, the training procedure for each local model is essentially identical to the procedure in Section~\ref{training}, with the additional weighting factor \(p(c\mid\bm{x})\) accounting for the soft allocation.

The posterior reliability used in the E-step should likewise be aggregated across the local models. Specifically,
\begin{align}
\gamma(\bm{x},y)
&= p(z=1\mid\bm{x},y) \nonumber\\
&= \frac{
\sum_c p(y\mid\bm{x},z=1,c)
p(z=1\mid\bm{x},c)
p(c\mid\bm{x})
}{
\sum_c p(y\mid\bm{x},z=1,c)
p(z=1\mid\bm{x},c)
p(c\mid\bm{x})
+
\left[
1-\sum_c p(c\mid\bm{x})p(z=1\mid\bm{x},c)
\right]p_{\mathrm{rand}}(y)
}.
\end{align}
The unpredictable component \(p_{\mathrm{rand}}(y)\) is also updated collectively as in Equations~\ref{unpred-reg} and \ref{eqn-prandc} across all local models. For testing, predication from different local models can be aggregated as well probabilistically.

\section{Experiments}\label{experiments}

As discussed in Section~\ref{background}, the decomposition offered by our framework provides a unique perspective, for which appropriate evaluation and benchmarking remain open problems for future research. In this section, we conduct experiments on both simulated and real-world datasets seeking to establish three key aspects. First, since the components of our framework can be recombined in a principled manner to recover \(p(y\mid\bm{x})\) or \(\mathbb{E}[y\mid\bm{x}]\), we compare its predictive performance based on them with that of a naive learner to demonstrate that the decomposition does not compromise overall predictive power. Second, by applying our framework to a simulated dataset with a known data-generating mechanism, we demonstrate that the procedures described in Sections~\ref{training} and \ref{feature} behave as intended. Finally, experiments on a challenging financial dataset demonstrate that the additional structure revealed by our framework is meaningful and can offer practical benefits for downstream tasks such as decision-making and feature selection.

\subsection{Decomposition with a Simulated Dataset}

Figure~\ref{fig:03} illustrates the generating mechanism of a simulated dataset under our framework. Each instance has two features and a binary label. The feature space is equipped with a soft allocation scheme over two regions, with $p(c_1\mid\bm{x})$ gradually decreasing from 1 to 0 as $x_1$ moves from left to right. In each region, $p(z=1\mid\bm{x})$ is defined by a Gaussian-shaped function, such that reliability decreases with distance from the regional centre. The predictable component $p(y\mid\bm{x}, z=1, c)$ is defined by applying a sigmoid function to a quadratic function, yielding a clean decision boundary in each region in the form of a parabola. In the left region, the parabola is rotated by $45^\circ$, resulting in a different optimal representation basis. The reliability function is symmetric and therefore can work with any region-specific representation suitable for the classification task. The unpredictable component is homogeneous across the feature space but has an imbalanced label distribution, with probabilities 0.4 and 0.6 for labels 0 and 1, respectively.

\begin{figure}[H]
    \centering
    \begin{subfigure}[b]{0.32\textwidth}
        \centering
        \includegraphics[height=0.6\textwidth]{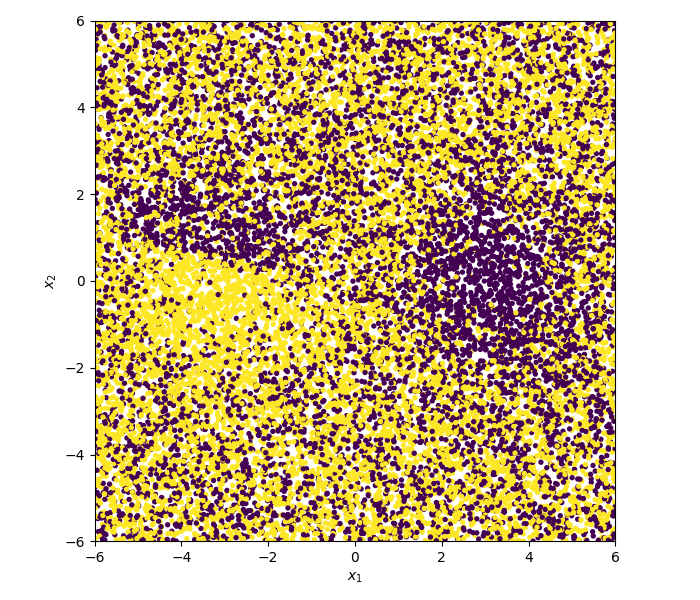}
        \caption{Labeled dataset.}
        \label{fig:03a}
    \end{subfigure}
    \hfill
    \begin{subfigure}[b]{0.32\textwidth}
        \centering
        \includegraphics[height=0.6\textwidth]{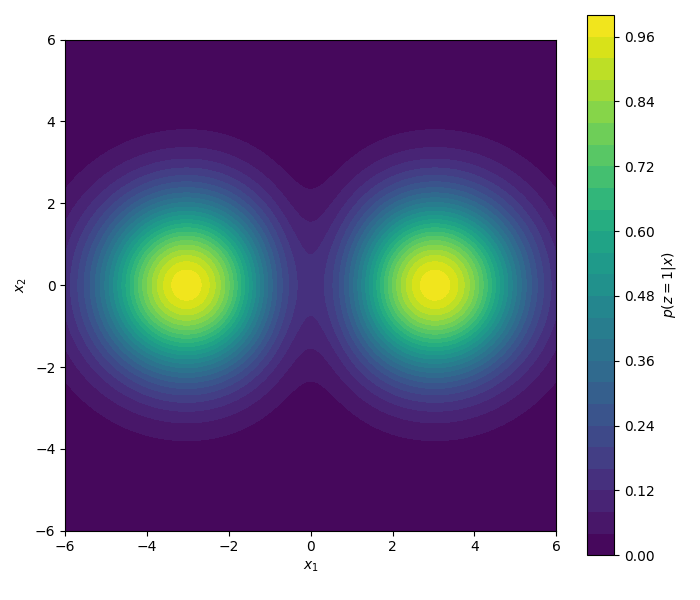}
        \caption{True $p(z = 1\mid \bm{x})$.}
        \label{fig:03b}
    \end{subfigure}
    \hfill
    \begin{subfigure}[b]{0.32\textwidth}
        \centering
        \includegraphics[height=0.6\textwidth]{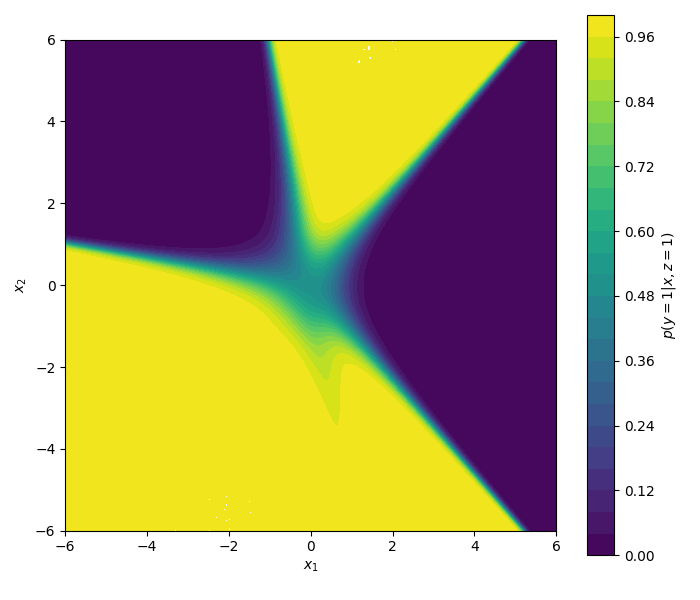}
        \caption{True $p(y = 1\mid\bm{x}, z = 1)$.}
        \label{fig:03c}
    \end{subfigure}

        \caption{(a) Labeled training dataset in feature space. (b) True reliability $p(z = 1\mid \bm{x})$. (c) True predictable component distribution $p(y = 1\mid\bm{x}, z = 1)$.}
    \label{fig:03}
\end{figure}

A vanilla neural classifier achieves an accuracy of $61\%$ on the test dataset. However, as illustrated by the results from our model in Figure~\ref{fig:04}, this single number omits substantial information. As shown in Figures~\ref{fig:04a}, \ref{fig:04b}, and \ref{fig:04c}, our model is able to correctly identify the reliability, predictable component, and suitable local feature transformations. It also estimates $p_{\mathrm{rand}}$ as $0.41$ and $0.59$ for the two classes, respectively, which is very close to the ground truth. 

In Figure~\ref{fig:04d}, we report the classification accuracy of the classification network alone, i.e., the predication is made according to the predictable component $p(y\mid\bm{x}, z=1)$, on instances whose reliability exceeds different thresholds. The central promise of our framework is that the classification network should achieve higher accuracy on more reliable instances. This is confirmed by the figure, where accuracy increases with the reliability threshold. Meanwhile, by combining the components and making predictions according to $p(y\mid\bm{x})$, our framework achieves an overall accuracy of $62\%$, essentially matching that of the vanilla neural classifier. Thus, the decomposition preserves overall predictive performance while providing additional insight into prediction reliability. Furthermore, as long as the classification network is applied to instances with reliability $\geq 0.1$, it can always achieve a higher accuracy.

\begin{figure}[H]
    \centering
    \begin{subfigure}[b]{0.24\textwidth}
        \centering
        \includegraphics[height=0.6\textwidth]{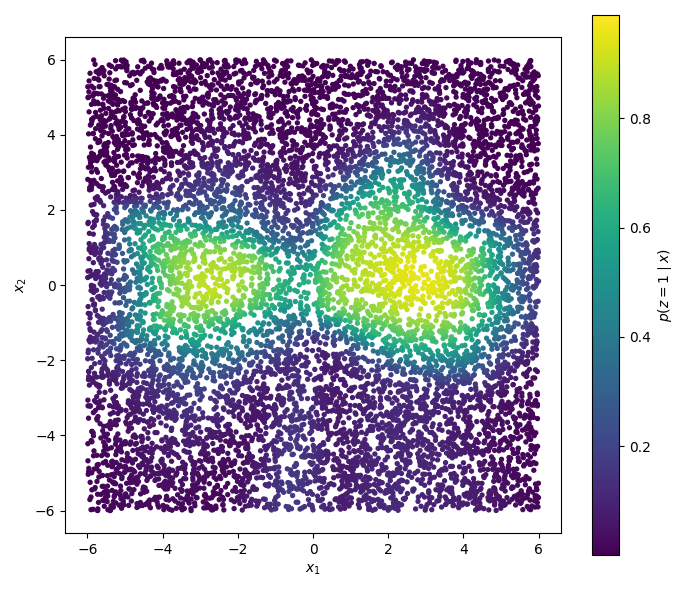}
        \caption{}
        \label{fig:04a}
    \end{subfigure}
    \hfill
    \begin{subfigure}[b]{0.24\textwidth}
        \centering
        \includegraphics[height=0.6\textwidth]{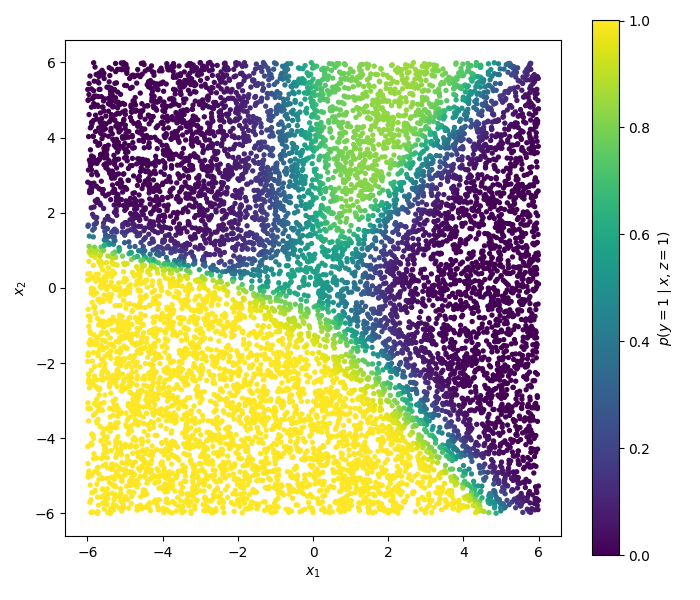}
        \caption{}
        \label{fig:04b}
    \end{subfigure}
    \hfill
    \begin{subfigure}[b]{0.24\textwidth}
        \centering
        \includegraphics[height=0.6\textwidth]{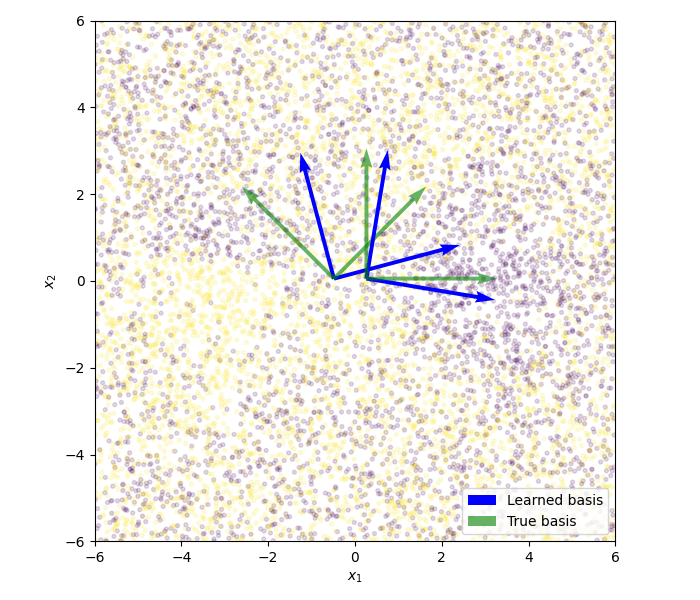}
        \caption{}
        \label{fig:04c}
    \end{subfigure}
    \hfill
    \begin{subfigure}[b]{0.24\textwidth}
        \centering
        \includegraphics[height=0.6\textwidth]{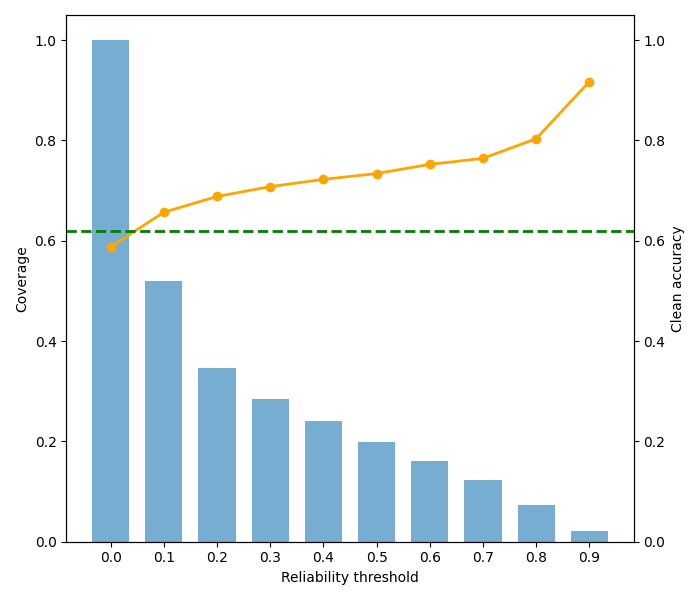}
        \caption{}
        \label{fig:04d}
    \end{subfigure}

        \caption{(a) Estimated reliability on test data. (b) Estimated predictable component on test data. (c) Theoretical (green) and learned (blue) local feature transformations. (d) Classification accuracy on test data with different reliability thresholds.}
    \label{fig:04}
\end{figure}


\subsection{Decomposition with a Real-world Dataset}

Directional prediction on financial data is generally considered a highly challenging task \citep{malkiel1985random}, with neural learners often generalizing poorly that limit their practical applicability \citep{ahmed2022artificial}. But poor overall predictive performance does not necessarily imply that the entire feature space is uninformative. As we will demonstrate, despite the inherent difficulty of the task, certain regions of the feature space can be more predictive than others, and many meaningful structures are hidden behind a single bad number.

For the real-world dataset, we consider stock 15-minute price data for JPMorgan Chase \& Co. (JPM), which include the open ($o$), close ($c$), and average ($a$) trading prices during each 15-minute interval, as well as the bid and ask prices at the beginning of the interval. Let $t$ denote the beginning timestamp of an interval. We construct three features as follows:
\begin{align*}
x_1^t &= \frac{a_{t-1} - o_{t-1}}{o_{t-1}}, \quad
x_2^t = \frac{c_{t-1} - a_{t-1}}{a_{t-1}}, \quad
x_3^t = \frac{\mathrm{ask}_t - \mathrm{bid}_t}{c_{t-1}}.
\end{align*}
For the classification task, we define a three-class response variable with labels $0$, $2$, and $1$, corresponding to the cases $a_t < \mathrm{bid}_t$, $a_t > \mathrm{ask}_t$, and $\mathrm{bid}_t \leq a_t \leq \mathrm{ask}_t$, respectively. For the regression task, a closely related response variable is defined as
\begin{align*}
y_t = \frac{a_t - (\mathrm{bid}_t + \mathrm{ask}_t)/2}{\mathrm{ask}_t - \mathrm{bid}_t}.
\end{align*}
The training set is then constructed with these features and responses using historic data from April 2017 to November 2024, and the test set comprises subsequent data till May 2026.

We apply our framework to the dataset in both classification and regression settings, standardizing the features beforehand and using a global feature transformation with $\bm{b}=\bm{0}$. As in the simulated-data experiments, the most important thing is to examine whether the decomposition recovers meaningful predictive structure in the feature space without sacrificing the predictive performance of a vanilla neural learner. Indeed, our framework achieves an overall classification accuracy of $44\%$ and a regression root mean squared error (RMSE) of $5.03$, closely matching the corresponding results of vanilla neural learners ($44\%$ and $5.02$, respectively). Meanwhile, as illustrated in Figures \ref{fig:05a} and \ref{fig:05d}, predictability varies across the feature space: instances in regions of higher estimated reliability tend to exhibit higher classification accuracy and lower prediction RMSE.

We also observe that the predictability structure appears more pronounced in classification than in regression, as reflected in the greater variation in the proportion of instances covered across reliability levels. One possible explanation is that predicting direction is easier than predicting magnitude, which requires both directional and quantitative accuracy. Furthermore, Figure~\ref{fig:05a} shows that the classifier achieves higher accuracy than its overall accuracy only at very high reliability levels ($w \geq 0.9$), suggesting that the estimated unpredictable component can provide complementary predictive information for classification. In contrast, Figure~\ref{fig:05d} shows that the regression network achieves an RMSE below the overall RMSE at every reliability level, suggesting that incorporating the estimated unpredictable component may be detrimental to regression performance. This difference may be explained by the substantially larger estimated noise scale of the unpredictable component ($\hat{\sigma}_0 = 17.01$) compared with that of the predictable component ($\hat{\sigma}_1 = 5.80$), which increases the variance of the combined predictive distribution and can consequently inflate its RMSE.

\begin{figure}[H]
    \centering
    \begin{subfigure}[b]{0.24\textwidth}
        \centering
        \includegraphics[height=0.6\textwidth]{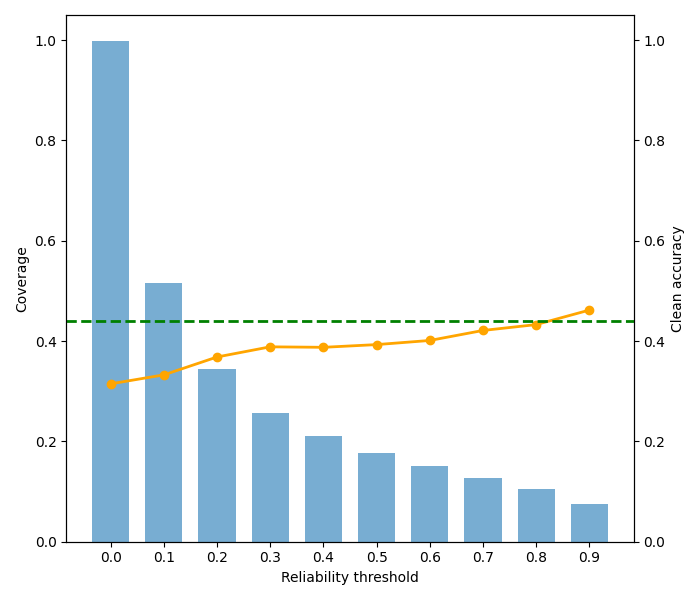}
        \caption{}
        \label{fig:05a}
    \end{subfigure}
    \hfill
    \begin{subfigure}[b]{0.24\textwidth}
        \centering
        \includegraphics[height=0.6\textwidth]{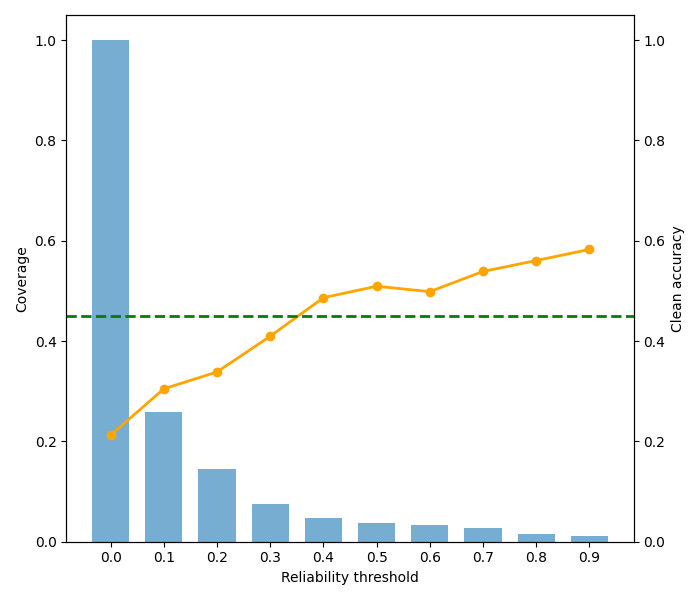}
        \caption{}
        \label{fig:05b}
    \end{subfigure}
    \hfill
    \begin{subfigure}[b]{0.24\textwidth}
        \centering
        \includegraphics[height=0.6\textwidth]{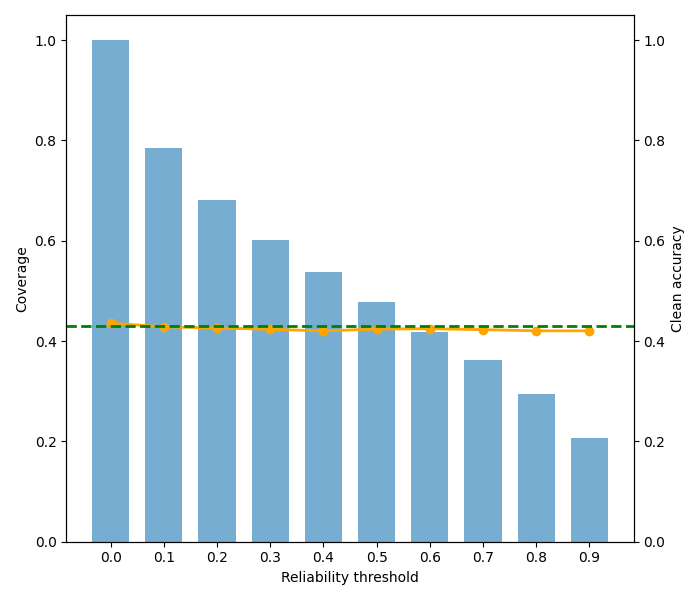}
        \caption{}
        \label{fig:05c}
    \end{subfigure}
    \hfill
    \begin{subfigure}[b]{0.24\textwidth}
        \centering
        \includegraphics[height=0.6\textwidth]{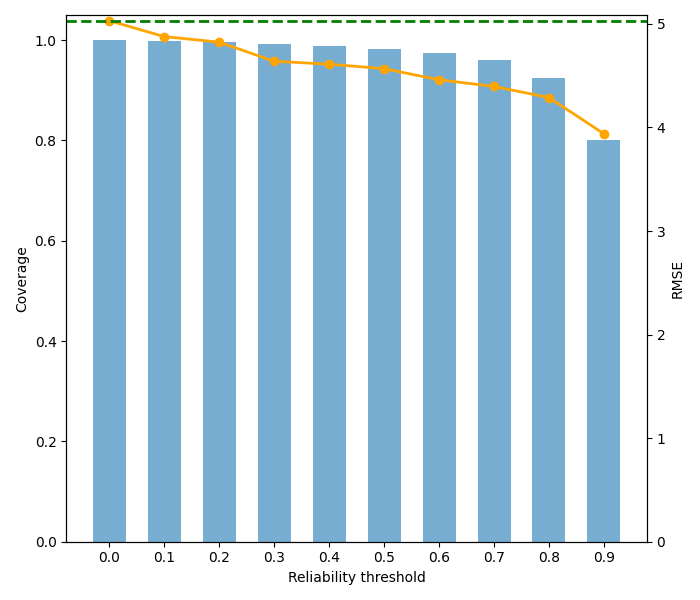}
        \caption{}
        \label{fig:05d}
    \end{subfigure}

    \caption{(a) Classification accuracy with different reliability thresholds. (b) Classification accuracy versus reliability thresholds with $x_2$ removed. (c) Classification accuracy versus reliability thresholds with $x_4$ added. (b) Predictation RMSE with different reliability thresholds.}
    \label{fig:05}
\end{figure}

For practitioners, the additional structure provided by our framework can inform practical decision-making in several ways. For a fixed dataset, the predictable component and its associated reliability should remain the primary basis for decision-making, since the unpredictable component captures aspects of the response that cannot be explained by the available features. As our results demonstrate, incorporating this component into predictions may or may not improve performance, and its usefulness cannot be determined a priori. Furthermore, our framework provides a principled way to evaluate the effects of adding or removing features by examining changes in the estimated reliability structure and the predictive performance of the predictable component.

As a simple illustration, we consider two additional cases: removing feature \(x_2\) and adding the feature

$$
x_4^t = \frac{c_{t-1} - o_{t-1}}{o_{t-1}}.
$$

Both cases achieve overall classification accuracies comparable to that of the baseline. However, Figure~\ref{fig:05b} suggests that removing \(x_2\) produces a more pronounced predictability structure: the reliability network becomes more selective, and instances assigned higher reliability are classified more accurately. In contrast, Figure~\ref{fig:05c} suggests that adding \(x_4\) substantially weakens the reliability structure, as reliability thresholds become less effective at distinguishing instances with different levels of predictability, and high-reliability instances no longer exhibit correspondingly high classification accuracy. This may indicate that \(x_4\) provides largely redundant information given \(x_1\) and \(x_2\), consistent with the definitions of these features. These observations suggest that changes in the reliability structure, considered alongside the predictive performance of the predictable component, could provide a basis for developing a more systematic feature selection procedure.

\section{Conclusion}\label{conclusion}

In this paper, we propose a probability principled way to decompose a neural learner, which, on top of its predictive performance, allows it to discover the variation of preidtcability across the feature space. The resulting structure is the split of predictive and unpredictive component of the response. A training procedure is designed which can also be easily extended to allow supervised local feature transformations. We proved that the optimal training result will indeed lead to the learner network recover the predictable component of the response, and experiments on both synthetic and real datasets shows the effectiveness of our proposals, which reveals more insight compared to their naive counterpart.

For future work, it would be desirable to develop systematic methods for selecting various hyperparameters, such as the hidden layer size, feature transformation dimensions, and the number of local regions. We believe that classical statistical tests, as well as empirically motivated engineering algorithms, could be developed for this purpose. More sophisticated architectures for the reliability and learner networks may further improve the framework, but their choice should be guided by the underlying data structure that the framework seeks to capture.

\section{Limitation}\label{conclusion}

Our framework requires all features to be continuous, and for the neural networks to successfully identify the decomposition, different components of the model need to vary smoothly as well. Abrupt changes in reliability or in the predictable component can lead to poor estimation. Also, although Theorems~\ref{thm:01} and \ref{thm:02} establish that, when the estimated posterior reliability is exact, the population optimum corresponds to the predictable component of the response variable, our training procedure does not guarantee accurate estimation of the reliability or convergence to the global population optimum. In fact, the convergence of the combined EM, SGD, and MLE procedure is itself not theoretically established and remains a challenging problem in general. These limitations are further compounded by the identifiability issues commonly associated with latent variables and highly parameterized neural network architectures.

\bibliographystyle{plainnat}
\bibliography{reference}

\appendix
\section{Proofs of Theorems \ref{thm:01} and \ref{thm:02}}\label{appendix-a}

In this section, we provide proofs for Theorems \ref{thm:01} and \ref{thm:02} from Section \ref{design}.

\addtocounter{theorem}{-2}

\begin{theorem}
Assume that the training procedure described in Section~\ref{training} is applied to our framework for regression and reliability estimation is exact. At the population optimum, the learner network outputs
$$
f^*(\bm{x}) = \mathbb{E}[y\mid\bm{x},z=1].
$$
\end{theorem}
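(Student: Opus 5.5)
The plan is to write down the population version of the learner's M-step objective, with the posterior reliability frozen at its exact value, and minimize it pointwise in $\bm{x}$. Under the assumption that reliability estimation is exact, the weight in $Q(\bm{\theta}_L\mid\bm{\theta}^t)$ is the true posterior $\gamma(\bm{x},y)=p(z=1\mid\bm{x},y)$ from Equation~\ref{eqn-posterior}. Taking expectations over the data distribution, the learner maximizes
\begin{align*}
\mathcal{J}(f)=\mathbb{E}_{\bm{x},y}\!\left[\gamma(\bm{x},y)\log N\!\left(y;f(\bm{x}),\sigma_1^2\right)\right]
=-\frac{1}{2\sigma_1^2}\,\mathbb{E}_{\bm{x},y}\!\left[\gamma(\bm{x},y)\bigl(y-f(\bm{x})\bigr)^2\right]+C,
\end{align*}
where $C$ does not depend on $f$. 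Because $\sigma_1^2$ is a positive scalar, it does not affect the maximizer over $f$, even though it is updated separately by Equation~\ref{unpred-reg}. The population optimum is taken over all measurable $f$, which is the idealization implicit in the statement.

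The next step is to condition on $\bm{x}$. The problem then reduces to minimizing, for each fixed $\bm{x}$, the function $g(m)=\mathbb{E}[\gamma(\bm{x},y)(y-m)^2\mid\bm{x}]$ over $m\in\mathbb{R}$. This is a weighted least-squares problem in a single scalar, so it is strictly convex whenever $\mathbb{E}[\gamma\mid\bm{x}]>0$. Setting the derivative to zero gives
\begin{align*}
f^*(\bm{x})=\frac{\mathbb{E}[\gamma(\bm{x},y)\,y\mid\bm{x}]}{\mathbb{E}[\gamma(\bm{x},y)\mid\bm{x}]}.
\end{align*}

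The key step is to identify this ratio using the identity $\gamma(\bm{x},y)p(y\mid\bm{x})=p(z=1,y\mid\bm{x})=w(\bm{x})\,p(y\mid\bm{x},z=1)$, which follows directly from Bayes' rule in Equation~\ref{eqn-posterior}. Substituting it gives $\mathbb{E}[\gamma y\mid\bm{x}]=\int y\,p(z=1,y\mid\bm{x})\,dy=w(\bm{x})\,\mathbb{E}[y\mid\bm{x},z=1]$ and $\mathbb{E}[\gamma\mid\bm{x}]=w(\bm{x})$. The factors of $w(\bm{x})$ cancel, so $f^*(\bm{x})=\mathbb{E}[y\mid\bm{x},z=1]$. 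Note that this step uses only the definition of the posterior, not the Gaussian form of $p(y\mid\bm{x},z=1)$.

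I expect the main obstacle to be interpretive rather than computational: pinning down what ``population optimum'' means for the alternating EM/SGD procedure. I will handle it by fixing the E-step weights at the true posterior, as the hypothesis permits, and treating the learner's M-step as an exact maximization over an unrestricted function class. Two side conditions also need care. First, $w(\bm{x})>0$ is required, since otherwise $f^*$ is not determined on that set. Second, finite second moments of $y$ are required so that the objective is finite.
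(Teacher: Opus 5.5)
Your proposal is correct and rests on the same key fact as the paper's proof: weighting by the exact posterior $\gamma(\bm{x},y)=\mathbb{E}[z\mid\bm{x},y]$ is equivalent to restricting to the $z=1$ component. The paper applies this through the tower property to the whole objective first, rewriting it as $p(z=1)\,\mathbb{E}[\log p(y\mid\bm{x},z=1;\bm{\theta}_L)\mid z=1]$ before using the conditional-mean property of squared loss, whereas you solve the weighted least-squares problem first and identify the ratio $\mathbb{E}[\gamma y\mid\bm{x}]/\mathbb{E}[\gamma\mid\bm{x}]$ afterwards. The two orderings are equivalent, and your explicit requirement $w(\bm{x})>0$ (so that $f^*$ is determined only where $p(\bm{x}\mid z=1)$ has mass) is a condition the paper leaves implicit.
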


\begin{proof}

Denote the output of the learner network for regression by $f(\bm{x};\bm{\theta}_L)$, which is intended to estimate the conditional mean of the predictable component in the response. Our target is to show that the training procedure described in Section~\ref{training} indeed leads it to converge to the true conditional mean at the population optimum. 

According to the training procedure, the learner network is trained by maximizing the log likelihood of the predicable component of the response, $\log p(y\mid \bm{x}, z = 1)$, weighted by the posterior reliability, which is also the conditional mean of the latent variable $z$, i.e.,
$\gamma(\bm{x},y) = p(z=1\mid\bm{x},y) = \mathbb{E}[z\mid\bm{x},y]$, hence, at the instance level, the optimization target is
$\mathbb{E}\left[z\log p(y\mid\bm{x},z=1;\bm{\theta}_L)\mid \bm{x}, y\right],$
where the expectation is taken with respect to $p(z \mid \bm{x}, y)$,  and when we iterate over the entire dataset, the population level optimization target is
\begin{align}
    \mathcal{L}_{\mathrm{reg}}
    = \mathbb{E}
    \left[
        \mathbb{E}
        \left[
            z
            \log
            p(y\mid\bm{x},z=1;\bm{\theta}_L)
            \,\middle|\,
            \bm{x},y
        \right]
    \right],
\end{align}
where the outer expectation is taken with respect to the joint distribution of $\bm{x}$ and $y$.

Since $z\in\{0,1\}$, we further have
\begin{align}
    \mathcal{L}_{\mathrm{reg}}
    &=
    \mathbb{E}
    \left[
        \mathbb{E}
        \left[
            z
            \log
            p(y\mid\bm{x},z=1;\bm{\theta}_L)
            \,\middle|\,
            \bm{x},y
        \right]
    \right]
    \nonumber\\
    &=
    \iint
    p(z=1)p(\bm{x},y\mid z = 1)
    \log
    p(y\mid\bm{x},z=1;\bm{\theta}_L)d\bm{x}dy
    \nonumber\\
    &=
    p(z=1)
    \mathbb{E}
    \left[
        \log
        p(y\mid\bm{x},z=1;\bm{\theta}_L)
        \mid z=1
    \right]\nonumber\\
    &= p(z=1)
    \mathbb{E}\left[\mathbb{E}
    \left[
        \log
        p(y\mid\bm{x},z=1;\bm{\theta}_L)
        \mid \bm{x}, z=1
    \right]\mid z=1 \right],
\end{align}
where we use the law of total expectation in the last equality. This means that, at the population level, the objective is determined entirely by the $z = 1$ instances, as desired.

Since $z$ is independent of $\bm{\theta}_L$, maximizing
$\mathcal{L}_{\mathrm{reg}}$ is equivalent to maximizing the inner expectation, under our Gaussian assumption, it is known in closed form as
\begin{align}
    -\mathbb{E}\left[\frac{1}{2\sigma_1^2}
        \left\{
        y-f(\bm{x};\bm{\theta}_L)
        \right\}^2\mid \bm{x}, z = 1\right]
    -
    \frac{1}{2}\log(2\pi\sigma_1^2).
\end{align}
which is maximized when
\begin{equation}
        f^*(\bm{x})
        =
        \mathbb{E}[y\mid\bm{x},z=1].
\end{equation}
Thus, at the population optimum, the regression network recovers the
conditional mean of the predictable component.
\end{proof}

\begin{theorem}
Assume that the training procedure described in Section~\ref{training} is applied to our framework for classification and reliability estimation is exact. At the population optimum, the learner network outputs the conditional distribution
$$
f^*(\bm{x}) = p(y\mid\bm{x},z=1),
$$
over the classes of $y$.

\end{theorem}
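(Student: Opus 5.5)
The argument follows the proof of Theorem~\ref{thm:01} up to the final step, where the Gaussian squared-error criterion is replaced by a cross-entropy (Gibbs' inequality) argument. Write the learner output as a probability vector $f(\bm{x};\bm{\theta}_L)=(f_1(\bm{x}),\dots,f_K(\bm{x}))$ on the simplex, for example via a softmax output layer, so that $p(y\mid\bm{x},z=1;\bm{\theta}_L)=f_y(\bm{x})$. Under the procedure of Section~\ref{training} with exact reliability, the learner part of the $Q$-function in Equation~\ref{eqn-qfunction} is $\gamma(\bm{x},y)\log f_y(\bm{x})$. Since $\gamma(\bm{x},y)=\mathbb{E}[z\mid\bm{x},y]$, the population objective is
$$\mathcal{L}_{\mathrm{cls}}=\mathbb{E}\left[\mathbb{E}\left[z\log f_y(\bm{x})\,\middle|\,\bm{x},y\right]\right].$$

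\textbf{Step 1: reduce to the $z=1$ population.} By the tower property, $\mathcal{L}_{\mathrm{cls}}=\mathbb{E}[z\log f_y(\bm{x})]$. Because $z\in\{0,1\}$, this equals $p(z=1)\,\mathbb{E}[\log f_y(\bm{x})\mid z=1]$. Conditioning further on $\bm{x}$ within $z=1$ gives
$$\mathcal{L}_{\mathrm{cls}}=p(z=1)\,\mathbb{E}\Big[\sum_{k} p(y=k\mid\bm{x},z=1)\log f_k(\bm{x})\,\Big|\,z=1\Big],$$
with the sum replacing the integral over $y$ used in the regression case. The factor $p(z=1)$ does not depend on $\bm{\theta}_L$, so it can be dropped.

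\textbf{Step 2: maximize pointwise in $\bm{x}$.} Treat $f$ as unrestricted, which is the meaning of ``population optimum''. Then the objective is maximized by maximizing, for each $\bm{x}$ in the support of $p(\bm{x}\mid z=1)$, the quantity $\sum_k q_k\log f_k$ subject to $\sum_k f_k=1$ and $f_k\ge 0$, where $q_k=p(y=k\mid\bm{x},z=1)$. Gibbs' inequality gives
$$\sum_k q_k\log q_k-\sum_k q_k\log f_k=\mathrm{KL}(q\,\|\,f)\ge 0,$$
with equality iff $f=q$. A Lagrange multiplier argument reaches the same conclusion. Hence $f^*(\bm{x})=p(y\mid\bm{x},z=1)$.

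\textbf{Main obstacle.} The delicate step is justifying the pointwise maximization, which has two parts.

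First, the functional optimum must be taken over all measurable simplex-valued $f$. The conclusion then holds only almost everywhere with respect to $p(\bm{x}\mid z=1)$. Where $w(\bm{x})=0$ the objective places no constraint on $f$, and the statement must be read with this caveat.

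Second, the expected log-loss must be finite at the optimum, so that the KL decomposition is legitimate. This holds because $\sum_k q_k\log q_k$ is bounded below by $-\log K$ for finitely many classes.

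Once these two points are stated, the rest is the same chain of equalities as in the proof of Theorem~\ref{thm:01}.
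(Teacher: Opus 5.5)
Your proposal is correct and follows essentially the same route as the paper. Both arguments use the tower property to rewrite the $\gamma$-weighted objective as $p(z=1)\,\mathbb{E}\bigl[\mathbb{E}[\log p(y\mid\bm{x},z=1;\bm{\theta}_L)\mid\bm{x},z=1]\mid z=1\bigr]$ and then apply the cross-entropy/Gibbs argument to the inner term; your added caveats (uniqueness only almost everywhere with respect to $p(\bm{x}\mid z=1)$, and finiteness of the entropy term for finitely many classes) are valid refinements that the paper leaves implicit.
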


\begin{proof}

    Denote the output of the learner network for classification as $p(y\mid\bm{x}, z = 1;\bm{\theta}_L)$, which seeks to estimate the class probabilities of the predictable component of the response. Our target is to show that the training procedure described in Section~\ref{training} indeed leads it to converge to the true conditional probabilities at the population optimum. 
    
    Similar to the regression case, the population level optimization target is
    \begin{align}
        \mathcal{L}_{\mathrm{cls}}
        &= 
        \mathbb{E}
        \left[
        p(z=1\mid\bm{x},y)
        \log p(y\mid\bm{x}, z = 1;\bm{\theta}_L)
        \right] \nonumber\\&=
        p(z=1)
        \mathbb{E}
        \left[\mathbb{E}
        \left[
        \log p(y\mid\bm{x}, z = 1;\bm{\theta}_L)\mid\bm{x}, z = 1
        \right]z = 1\right],
    \end{align}
    which means maximizing the population objective is equivalent to maximizing the inner expectation
    \begin{equation}
    \mathbb{E}
        \left[
        \log p(y\mid\bm{x}, z = 1;\bm{\theta}_L)\mid\bm{x}, z = 1
        \right] =\sum_y p(y\mid\bm{x},z=1)
    \log p(y\mid\bm{x}, z = 1;\bm{\theta}_c), 
    \end{equation}
    which is just the cross-entropy between the true predictable response distribution $p(y\mid\bm{x},z=1)$ and the classifier output $p(y\mid\bm{x};\bm{\theta}_c)$. Therefore, the population optimum is achieved when
    \[
    p(y\mid\bm{x}, z = 1;\bm{\theta}_c)
    =
    p(y\mid\bm{x},z=1),
    \]
    indicating that the classifier learns the predictable component of the label distribution.
\end{proof}

\end{document}